\documentclass[runningheads,a4paper]{llncs}

\usepackage{url}







\usepackage[ruled,vlined]{algorithm2e}
\usepackage{algorithmic}
\usepackage{amsmath}
\usepackage{amsfonts}
\usepackage{amssymb}
\usepackage{graphicx}
\usepackage{url}
\usepackage{subfigure}
\usepackage{epstopdf}
\usepackage{multirow}
\usepackage{subfigure}
\usepackage{ifthen}

\newcommand{\defterm}{\textbf}



\newcommand{\A}{\mathbb{A}}
\newcommand{\B}{\mathbb{B}}


\renewcommand{\a}{a}


\newcommand{\TT}{T}

\newcommand{\UT}{U}


\newcommand{\TV}{t}
\newcommand{\TTuple}[1][0.0ex]{\vec{t}\hspace{#1}}
\newcommand{\UV}{u}
\newcommand{\UTuple}[1][0.0ex]{\vec{u}\hspace{#1}}

\newcommand{\VTuple}{\vec{v}}


\newcommand{\QC}{\FG{\Lambda}}

\newcommand{\Qconj}{\Appendterm{\FG{\TT_{\grounding}} = \TV} {\QC}}


\newcommand{\FG}[2][0.0ex]{#2^{*}\hspace{#1}}


\newcommand{\Pa}[1]{\mathrm{Pa}(#1)}
\newcommand{\Ch}[1]{\mathrm{Ch}(#1)}


\newcommand{\Man}{\mathrm{M}}
\newcommand{\Woman}{\mathrm{W}}

\newcommand{\sepcup}[1][0.5ex]{\hspace{#1}\cup\hspace{#1}}
\newcommand{\Setaddterm}[2]{#1 \sepcup #2}
\newcommand{\Appendterm}[2]{#1, #2}

\newcommand{\Mrange}[1]{\ifthenelse{\equal{#1}{T}}{\TTuple_m}{\ifthenelse{\equal{#1}{U}}{\UTuple_m}{\ifthenelse{\equal{#1}{V}}{\VTuple_m}{\mbox{UNKNOWN
TERM ID}}}}}
\newcommand{\Prange}[1]{\ifthenelse{\equal{#1}{T}}{\vec{t}_{pa}}{\ifthenelse{\equal{#1}{U}}{\vec{u}_{pa}}{\ifthenelse{\equal{#1}{V}}{\vec{v}_{pa}}{\mbox{UNKNOWN
TERM ID}}}}}

\newcommand{\GroundPrange}[1]{\ifthenelse{\equal{#1}{T}}{\vec{t}_{pa,\grounding'}}{\ifthenelse{\equal{#1}{U}}{\vec{u}_{pa,\grounding'}}{\ifthenelse{\equal{#1}{V}}{\vec{v}_{pa,\grounding'}}{\mbox{UNKNOWN
TERM ID}}}}}

\newcommand{\joint}{p}

\newcommand{\cprob}[2]{\theta(#1|#2)}

\newcommand{\Gpvar}{P}
\newcommand{\Gprob}[2]{\Gpvar(#1 | #2)}
\newcommand{\Cvar}{\mathrm{n}}
\newcommand{\Fvar}{\mathrm{p}}
\newcommand{\Count}[2]{\Cvar\left[#1;#2\right]}

\newcommand{\Relevant}[1]{#1^{\mathrm{r}}}
\newcommand{\Relcount}[2]{\Relevant{\Cvar}\left[#1;#2\right]}

\newcommand{\Relfreq}[2]{\Relevant{\Fvar}\left[#1;#2\right]}






\newcommand{\parents}{\mathbf{pa}}



\newcommand{\functor}{f}


 %

\newcommand{\term}{\tau}


\newcommand{\grounding}{\gamma}

















\newcommand{\true}{\mathrm{T}}
\newcommand{\false}{\mathrm{F}}




\renewcommand{\Qconj}{\Appendterm{\FG{\TT} = \TV} {\QC}} 

\usepackage{graphicx}



\newcommand{\iid}{i.i.d.}


\title{Fast Learning of Relational Dependency Networks}

\author{Oliver Schulte, Zhensong Qian, Arthur E. Kirkpatrick\\ Xiaoqian Yin, Yan Sun 
}
\authorrunning{  Oliver Schulte, Zhensong Qian, Arthur E. Kirkpatrick et al.}
\institute{ School of Computing Science, Simon Fraser University, Canada\\
\{oschulte,zqian,ted,xiaoqian\_yin,sunyans\}@sfu.ca\\}

\begin{document}

\maketitle

\begin{abstract} 
A Relational Dependency Network (RDN) is a directed graphical model widely used for multi-relational data. These networks allow cyclic dependencies, necessary to represent relational autocorrelations. We describe an approach for learning both the RDN's structure and its parameters, given an input relational database: First learn a Bayesian network (BN), then transform the Bayesian network to an RDN. Thus fast Bayes net learning can provide fast RDN learning. The BN-to-RDN transform comprises a simple, local adjustment of the Bayes net structure and a closed-form transform of the Bayes net parameters. This method can learn an RDN for a dataset with a million tuples in minutes. We empirically compare our approach to state-of-the art RDN learning methods that use functional gradient boosting, on five benchmark datasets. Learning RDNs via BNs scales much better to large datasets than learning RDNs with boosting, and provides competitive accuracy in predictions.\end{abstract}

 \section{Introduction} \label{sec:intro} Learning graphical models is one of the main approaches to extending machine learning for relational data. 
Dependency networks (DNs) \cite{Heckerman2000} are one of the major classes of graphical generative models, together with Markov networks and Bayesian networks (BNs) \cite{Pearl1988}. We describe a new approach to learning dependency networks: first learn a Bayesian network, then convert the Bayesian network to a dependency network. 
This hybrid approach combines the advantages of learning with Bayesian networks and performing inference with relational dependency networks. 
The hybrid learning algorithm produces dependency networks for large complex databases, with up to one million records, and up to 19 predicates. The predictive accuracy of the learned dependency networks is competitive with those constructed by state-of-the-art function gradient boosting methods.
Bayesian network learning scales substantially better   to larger datasets than the boosting methods.
Our main contributions are:
\begin{enumerate}
\item A faster approach for learning relational dependency networks: first learn a Bayesian network, then convert it to a dependency network.
\item A closed-form log-linear discriminative model for computing the relational dependency network parameters from Bayesian network structure and parameters.
\end{enumerate}
  
 \section{Relational Dependency Networks and Bayesian Networks} We review the definition of dependency networks and their advantages for modelling relational data. We assume familiarity with the basic concepts of Bayesian networks \cite{Pearl1988}.
 
 \subsection{Dependency networks and Bayesian networks} Like Bayesian networks, the structure of a dependency network is defined by a graph whose nodes are random variables and whose edges are directed. Unlike Bayesian networks, a dependency network graph may contain cycles and bi-directed edges. As with Bayesian networks, the parameters of dependency networks are conditional distributions over the value of a child node given its parents. The difference lies in the characteristic independence property of dependency networks: each node is independent of {\em all} other nodes given an assignment of values to its parents, which is generally not the case for Bayesian networks.
In graphical model terms, the parents of a node in a dependency network form a Markov blanket: A minimal set of nodes such that assigning them values will make this node independent of the rest of the network. 

Consequently, a parameter in a dependency network effectively specifies the probability of a node value given an assignment of values to all other nodes. 
We therefore refer to such conditional probabilities as \defterm{Gibbs conditional probabilities}, or simply Gibbs probabilities.\footnote{In the terminology of dependency networks \cite{Heckerman2000},  Gibbs  probabilities are referred to as local probability distributions.}
Gibbs sampling can be used to derive a joint distribution from the Gibbs probability DN parameters \cite{Heckerman2000,Neville2007}. This is the counterpart to the Bayes net product formula that derives a joint distribution from the network's conditional probability parameters. 

\begin{figure}[htbp]
\begin{center}
\includegraphics[width = 0.7 \textwidth]{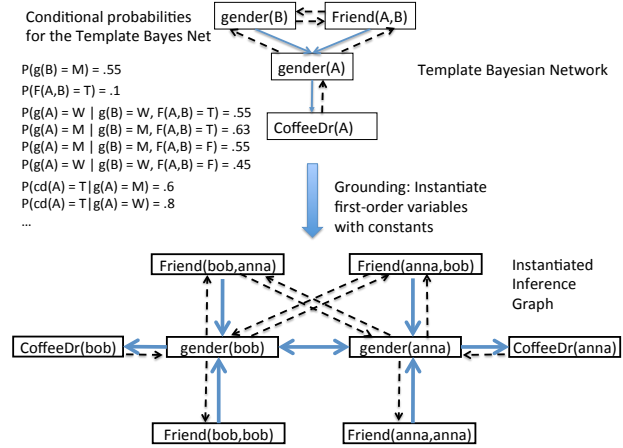}
\caption{A Bayesian/dependency template network (top) and the instantiated inference graphs (bottom). BN edges are shown as blue and solid. The BN-to-DN transformation adds the edges shown as black and dashed. Notice that grounding the BN induces a bi-directed edge between $\it{gender}(bob)$ and $\it{gender}(anna)$. \label{fig:dn}}
\end{center}
\end{figure}

\subsection{Relational Dependency Networks} We use  functor-based notation for graphical-relational models \cite{Poole2003}. A functor is a function or predicate symbol. Each functor has a set of values (constants) called the \textbf{domain} of the functor. In this paper we consider only functors with finite domains. A \textbf{Parametrized Random Variable} (PRV) is of the form $f(\term_{1},\ldots,\term_{k})$ where $\functor$ is a functor 
and each $\term_{i}$ is a first-order variable or a constant.
 A Parametrized Bayesian Network structure is a directed acyclic graph whose nodes are PRVs. A \textbf{relational dependency network structure} (RDN) is a directed graph whose nodes are PRVs.
RDNs extend dependency networks for relational data by using knowledge-based model construction \cite{Neville2007}:
 The first-order variables in a template RDN graph are instantiated for a specific domain of individuals to produce an {\em  instantiated} or {\em ground} propositional DN graph, the \defterm{inference graph}. Figure~\ref{fig:dn} gives a dependency network template and its grounded inference graph. An example Gibbs probability distribution for the inference graph (abbreviating functors to their first letter) is

$$P(\it{g(anna)}|\it{g(bob)}, \it{CD(anna)}, \it{F(anna,bob)},\it{F(bob,anna)},\it{F(anna,anna)}).$$

\noindent Both the structure and the parameter space of RDN models offer advantages for relational data \cite{Neville2007,Natarajan2012}: (1) Dependency network structures are well-adapted for relational data because they allow cyclic dependencies, so grounding a dependency network template is guaranteed to produce a valid dependency network.
(2) Relational prediction  requires aggregating information from different linked individuals \cite{Natarajan2008}. 
In a dependency network parameter, the aggregation encompasses the entire Markov blanket of a target node, whereas for Bayesian network parameters, the aggregation encompasses only its parents.

\section{Learning Relational Dependency Networks via Bayesian Networks}
Our algorithm for rapidly learning relational dependency networks
begins with any relational learning algorithm for Bayesian networks. We then apply a simple, fast transformation of the resulting Bayesian network to a relational dependency template. Finally we apply a closed-form computation to derive the dependency network parameters from the Bayesian structure and parameters. Figure~\ref{fig:bn-flow} shows the program flow. 

Converting a Bayesian network structure to a dependency network structure is simple: for each node, add an edge pointing to the node from each member of its BN Markov blanket~\cite{Heckerman2000}.  The result contains  bidirectional links between each node, its children, and its co-parents (nodes that share a child with this one). 
This is equivalent to the standard moralization  method for converting a BN to an undirected model \cite{Domingos2009}, except that the dependency network contains bi-directed edges instead of undirected edges. Bidirected edges have the advantage that they permit  assignment of different parameters to each direction, whereas undirected edges have only one parameter.
 
Converting Bayesian network parameters to dependency network parameters is simple for propositional \iid{} data: solve for the Gibbs conditional probabilities given Bayesian network parameters. The propositional result is as follows. A \defterm{family} comprises a node and its parents. A \defterm{family configuration} specifies a value for a child node and each of its parents. For example in the Bayesian network of Figure~\ref{fig:dn}, a family configuration is 
$$\it{gender}(\A) = \Man, \it{Friend}(\A,\B) = \true, \it{gender}(\B) = \Man.$$
For propositional data, an assignment of values to the Markov blanket of a target node assigns a unique configuration for each family whose child is the target node or one of its children. Hence the Markov blanket induces a {\em unique} log-conditional probability for each such family configuration. The probability of a target node value given an assignment of values to the Markov blanket is then proportional to the exponentiated sum of these log-conditional probabilites \cite[Ch.14.5.2]{Russell2010}. 

With relational data, different family configurations such as the one above can be simultaneously instantiated, multiple times.  We adapt the propositional log-linear equation for relational data by replacing the unique log-conditional probability with the {\em expected} log-conditional probability that results from selecting an instantiation of the family configuration uniformly at random. The probability of a target node value given an assignment of values to the Markov blanket is then proportional to the exponentiated sum of the expected log-conditional probabilites. 
We describe the resulting closed-form equation in the next section.

\begin{figure}[t]

\begin{center}
\includegraphics[width=0.7\textwidth]{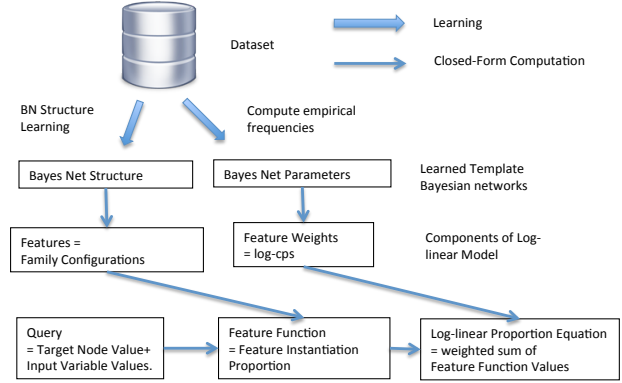}
\caption{The program flow for computing Gibbs probabilities from a template Bayesian network. Features and weights are computed from the Bayes net. Feature function values are computed for each query. \label{fig:bn-flow}}
\end{center}

\end{figure}

\section{The Log-linear Proportion Equation} 
\label{sec:theequation}

We propose a log-linear equation, the \defterm{log-linear proportion equation}, for computing a Gibbs conditional probability for a ground target node, $\FG{\TT}$, given (i) a target value $\TV$ for the target node, (ii) a complete set of values $\QC$  for all ground terms other than the target node, and (iii) a template Bayesian network. The template structure is represented by functions that return the set of parent nodes of $\UT$, $\Pa{\UT}$, and the set of child nodes of $\UT$, $\Ch{\UT}$. The parameters of the template are
represented by the conditional probabilities of a node $\UT$ having a value $\UV$ conditional on the values of its parents, $\cprob{\UT = \UV}{\Pa{\UT} = \Prange{\UT}}$. A grounding $\grounding$ substitutes a constant for each member of a list of first-order variables. A grounding is therefore equivalent to an equality constraint $\{\A_{1} = \a_{1},\ldots, A_{k} = \a_{k}\}$. Applying a grounding to a template node defines a fully ground target node. For instance, we may have $\it{gender}(\A) \{\A = sam\} = \it{gender}(sam)$.  These are combined in the following log-linear equation:

\begin{definition}[The Log-Linear Proportion Equation]\label{def:log-diff-freq-eq}
\begin{eqnarray*}
  \Gprob{\FG{{\TT}} = \TV} {\QC} &\propto &  \\
 \sum_{\UT} \sum_{\UV,\Prange{\UT}}   
\qquad \left[ \ln \cprob{\UT = \UV}{\Pa{\UT} = \Prange{\UT}} \right] &
    \cdot &
    \Relfreq{\Appendterm{\grounding;\UT  = \UV} {\Pa{\UT} = \Prange{\UT}}} {\Qconj}
\end{eqnarray*}
where 
\begin{eqnarray*}
\UT &\mathrm{varies\ over} & \Setaddterm{\{\TT\}} {\Ch{\TT}}, \\
\mbox{the singleton value} \ \UV & \mathrm{varies\ over} & \mbox{the range of}\  \UT,\\
\mbox{the vector of values} \ \Prange{\UT} & \mathrm{varies\ over} & \mbox{the product of the ranges of} \ \UT's\ \mbox{parents}, \\
\FG{\TT} = \TT \grounding&\mathrm{is} & \mbox{ is the target node grounding of template node }  \TT, \mathrm{and} \\
\Relevant{\Fvar} &\mathrm{is} & \mbox{the proportion feature function}.
\end{eqnarray*}
\end{definition}

The feature function $\Relevant{\Fvar}$ specifies the proportion of instantiations that satisfy a given family configuration, relative to all family configurations with positive links only. This proportion is computed as follows. 

\begin{enumerate}
\item For a given family configuration $(\Appendterm{\UT  = \UV} {\Pa{\UT} = \Prange{\UT}})$, let the \textbf{family  count} $$\Count{\Appendterm{\grounding;\UT  = \UV} {\Pa{\UT} = \Prange{\UT}}} {\Qconj}$$ be the number of instantiations that (a) satisfy the family configuration and the ground node values specified by $\Qconj$, and (b) are consistent with the equality constraint defined by $\grounding$. This notation is consistent with the parfactor notation of \cite{Poole2003}. 
\item The \textbf{relevant family count} $n^{r}$ is 0 if the family configuration contains a false relationship (other than the target node), else equals the feature count.
\item The \textbf{family proportion} is the relevant family count, divided by the total sum of all relevant family counts for the given family. In symbols:

\begin{equation} \notag
 \Relfreq{\Appendterm{\grounding;\UT  = \UV} {\Pa{\UT} = \Prange{\UT}}} {\Qconj} = \frac{\Relcount{\Appendterm{\grounding;\UT  = \UV} {\Pa{\UT} = \Prange{\UT}}} {\Qconj}}{\sum_{\UV',\Prange{\UT}'}\Relcount{\Appendterm{\grounding;\UT  = \UV'} {\Pa{\UT} = \Prange{\UT}'}} {\Qconj}}
\end{equation}
\end{enumerate}

It is common in statistical-relational models to restrict predictors to existing relationships only \cite{Getoor2007c,Russell2010}. The inner sum of Formula~\ref{def:log-diff-freq-eq} computes the expected log-conditional probability for a family with child node $\UT$, when we randomly select a relevant grounding of the first-order variables in the family. 

Definition~\ref{def:log-diff-freq-eq} has the form of a log-linear model \cite{Sutton2007}: The features of the model are the family configurations $(\Appendterm{\UT  = \UV} {\Pa{\UT} = \Prange{\UT}})$ 
where the child node is either the target node or one of its chldren. The feature weights are the log-conditional BN probabilities defined for the family configuration. The input variables are the values of the ground nodes other than the target nodes, specified by the conjunction $\QC$. The family count specifies how many times the feature is instantiated in the input variables (plus the target node value). The family proportion is the feature function, which maps a feature to a real value given the input variables. 
Proportions have the desirable consequence that all feature functions are normalized to the [0,1] range. 
\paragraph{Example.}
Table~\ref{table:log-diff-example} illustrates the computation of our log-linear model for predicting the gender of a new test instance ($sam$).
\begin{table}
\caption{Applying the log-linear proportion equation with the Bayesian network of Figure~\ref{fig:dn} to compute $\Gprob{\it{gender}(sam) = \Woman} {\QC}$ and $\Gprob{\it{gender}(sam) = \Man} {\QC}$. Each row represents a feature/family configuration. For the sake of the example we suppose that the conjunction $\QC$ specifies that Sam is a coffee drinker, has 60 male friends, and 40 female friends. $CP$ refers to the conditional probability BN parameter of Figure~\ref{fig:dn}. For the feature weights $w \equiv \ln(CP)$.
\label{table:log-diff-example}}
\centering
\begin{tabular}{l@{\hspace{.1in}}l@{\hspace{.1in}}r@{\hspace{.1in}}r@{\hspace{.1in}}r@{\hspace{.1in}}r}
{\setlength{\tabcolsep}{0pt}\begin{tabular}{l}Child \\Value $\UV$ \end{tabular}}&Parent State $\Prange{\UT}$&CP 
&{\setlength{\tabcolsep}{0pt}\begin{tabular}{c} $w$ \end{tabular}}&$\Relevant{\Fvar}$&{\setlength{\tabcolsep}{0pt}\begin{tabular}{c} $w \times \Relevant{\Fvar}$ \end{tabular}} \\\hline
$g(sam) = \Woman$&{\setlength{\tabcolsep}{0pt}\begin{tabular}{l}$ g(B) = \Woman,$\\ $F(sam,B) = \true$\end{tabular}}&$0.55$&$-0.60$&$0.4$&$-0.24$ \\
$g(sam) = \Woman$&{\setlength{\tabcolsep}{0pt}\begin{tabular}{l}$ g(B) = \Man,$\\ $ F(sam,B) = \true$\end{tabular}}&$0.37$&$-0.99$&$0.6$&$-0.60$ \\
$CD(sam) = \true$&{\setlength{\tabcolsep}{0pt}\begin{tabular}{l}$ g(sam) = \Woman$\end{tabular}}&$0.80$&$-0.22$&$1.0$&$-0.22$ \\
$CD(sam) = \false$&{\setlength{\tabcolsep}{0pt}\begin{tabular}{l}$g(sam) = \Woman$ \end{tabular}}&$0.20$&$-1.61$&$0.0$&$0.00$ \\\hline
\multicolumn{5}{l}{Sum ($\exp(Sum) \propto\Gprob{gender(sam)=\mathrm{\Woman}}{\QC}$)}&$-1.06$ \\\hline
$g(sam) = \Man$&{\setlength{\tabcolsep}{0pt}\begin{tabular}{l}$ g(B) = \Woman,$\\ $F(sam,B) = \true$\end{tabular}}&$0.45$&$-0.80$&$0.4$&$-0.32$ \\
$g(sam) = \Man$&{\setlength{\tabcolsep}{0pt}\begin{tabular}{l}$ g(B) = \Man,$\\ $ F(sam,B) = \true$\end{tabular}}&$0.63$&$-0.46$&$0.6$&$-0.28$ \\
$CD(sam) = \true$&{\setlength{\tabcolsep}{0pt}\begin{tabular}{l}$ g(sam) = \Man$\end{tabular}}&$0.60$&$-0.51$&$1.0$&$-0.51$ \\
$CD(sam) = \false$&{\setlength{\tabcolsep}{0pt}\begin{tabular}{l}$ g(sam) = \Man$\end{tabular}}&$0.40$&$-0.92$&$0.0$&$0.00$ \\\hline
\multicolumn{5}{l}{Sum ($\exp(Sum) \propto\Gprob{gender(sam)=\mathrm{\Man}}{\QC}$)}&$-1.11$ \\\hline

\end{tabular}
\end{table}

\paragraph{Estimating Bayes net parameters.}
The Bayesian network parameters can be estimated by applying the maximum likelihood principle, which entails using the empirical conditional frequencies observed in an input relational database \cite{Schulte2011,Schulte2014}. 
 Although there is theoretical justification for using the empirical frequencies, the ultimate test is whether the method can achieve comparable accuracy and greater speed than prior methods of computing relational dependency networks. In the next section, we empirically compare these methods.

\section{Empirical Comparison with Functional Gradient Boosting}\label{sec:empirical-comparison}

The next section describes experiments that compare learning RDNs via Bayesian networks with functional gradient methods for learning relational dependency networks. Boosting methods follow the traditional approach to learning dependency networks, which is to learn a collection of separate discriminative models, one for each node in the network \cite{Heckerman2000}. Functional gradient boosting has been shown to perform well on small datasets previously \cite{Khot2011,Natarajan2012}; our experiments provide new new tests of this method on medium to large datasets. 

\subsection{Experimental Conditions and Metrics}\label{sec:conditions}

All experiments were done on with 8GB of RAM and a single Intel Core 2 QUAD Processor Q6700 with a clock speed of 2.66GHz (there is no hyper-threading on this chip). The operating system was Linux Centos 2.6.32. Code was written in Java, JRE 1.7.0. All code and datasets are available~\cite{bib:jbnsite}. 

\subsubsection{Datasets}
We used 
5 benchmark real-world databases. For more details please see the references in \cite{Schulte2012}. Summary statistics appear in Table~\ref{table:learning-times}.

\begin{description}

\item[MovieLens Databases] MovieLens is a  commonly-used rating dataset\footnote{www.grouplens.org}. We added more related attribute information about the actors, directors and movies from the Internet Movie Database (IMDB)\footnote{www.imdb.com, July 2013}.
It contains two entity sets, Users and Movies. For each user and movie that appears in the database, all available ratings are included. MovieLens(1M) contains 1M ratings, 3,883 Movies, and 6,039 Users. MovieLens(0.1M) contains about 0.1M ratings, 1,682 Movies, and 941 Users. We did not use the binary genre predicates because they are easily learned with exclusion rules.

\item[Mutagenesis Database] This dataset is widely used in Inductive Logic Programming research. 
It contains information on Atoms, Molecules, and Bonds between them. We use the discretization of \cite{Schulte2012}.

\item[Hepatitis Database] This data is a modified version of the PKDD02 Discovery Challenge database. 
The database contains information on the laboratory examinations of hepatitis B and C infected patients. 

\item[Mondial Database] 
This dataset contains data from multiple geographical web data sources.

\item[UW-CSE Database] This dataset lists facts about the Department of Computer Science and Engineering at the University of Washington, such as entities (e.g., $Person$, $Course$) and the relationships (i.e. $AdvisedBy$, $TaughtBy$).

\end{description}

\subsubsection{Methods Compared} Functional gradient boosting is a state-of-the-art method for applying discriminative learning to build a generative graphical model. The local discriminative models are  ensembles of relational regression trees \cite{Khot2011}. Our experiments used the Boostr implementation of relational gradient boosting \cite{Khot2013}. The current implementation does not support multi-class boosting, so following previous experiments \cite{Khot2011}, we limited our comparison to {\em binary predicates}, i.e., functors that can take on only two possible values (e.g., $\it{AdvisedBy}$). We compared the following three learning methods.

\begin{description}
\item[RDN\_Bayes] Learns a Bayesian network, then converts it to a relational dependency network as described above.
\item[RDN\_Boost] The state-of-the-art gradient boosting method designed for learning RDNs. Information from ground nodes that are linked to the target node is aggregated with functions $count, max, average$ and existential quantification \cite{Natarajan2012}.
\item[MLN\_Boost] The state-of-the-art gradient boosting method designed for learning Markov Logic Networks. It takes as input a  list of target predicates for analysis. To construct an RDN, we provide each binary predicate as a single target predicate in turn. Information from ground nodes that are linked to the target node is aggregated with a log-linear model derived from Markov Logic Networks.
\end{description}

We followed the Boostr instructions for creating the background .bk file and used the default settings. We experimented with alternative settings but they did not improve the performance of the boosting methods.

To obtain the BN structure for RDN\_Bayes, the learn-and-join algorithm~\cite{Schulte2012} was applied to each benchmark database. The BN parameters were computed from the empirical conditional frequencies in the database using previously-published algorithms~\cite{Schulte2014}.

\subsubsection{Prediction Metrics}
We follow \cite{Khot2011} and evaluate the algorithms using conditional log likelihood (CLL) and AUC-PR (Area Under Precision-Recall Curve). AUC-PR is appropriate when the target predicates features a skewed distribution as is typically the case with relationship predicates.
For each fact $\FG{\TT} = \TV$ in the test dataset, we evaluate the accuracy of the predicted Gibbs probability $\Gprob{\FG{\TT} = \TV} {\QC}$, where $\QC$ is a complete conjunction for all ground terms other than $\FG{\TT}$. Thus $\QC$ represents the values of the input variables as specified by the test dataset.
CLL is the average of the logarithm of the Gibbs probability for each ground truth fact in the test dataset. For the gradient boosting method, we used the AUC-PR and likelihood scoring routines included in Boostr.

Both metrics are reported as averages over all binary predicates. The learning methods were evaluated using 5-fold cross-validation. Each database was split into 5 folds by randomly selecting entities from each entity table, and restricting the relationship tuples in each fold to those involving only the selected entities  (i.e., subgraph sampling~\cite{Schulte2012}). The models were trained on 4 of the 5 folds, then tested on the remaining one. All results are averages from 5-fold cross validation, over all descriptive attributes in the database.

\subsection{Results} 

Table~\ref{table:learning-times} shows learning times for the different methods. For the boosting method, we added together the learning times for each target predicate. The total learning times are not directly comparable because Bayes net learning simultaneously learns a joint model for all predicates. We therefore report total learning time divided by the number of all predicates for RDN\_Bayes, and total learning time divided by the number of binary predicates for the boosting methods. The numbers of predicates are given in the second column.
\begin{table}[htbp]
  \centering
  \caption{Learning Time (Sec) Per Predicate}
    \begin{tabular}{|l|p{2cm}|r|r|r|r|}
\hline
     Dataset & all predicates / binary predicates & \# tuples & RDN\_Bayes & RDN\_Boost & MLN\_Boost \\ \hline
    UW    & 14/4  & 612   & 0.74$\pm$0.05 & 14.57$\pm$0.39 & 19.27$\pm$0.77  \\
    Mondial & 18/4  & 870   & 101.53$\pm$6.90 & 27.21$\pm$0.98 & 41.97$\pm$1.03 \\
    Hepatitis & 19/7  & 11,316 & 285.71$\pm$20.94 & 250.61$\pm$5.32 & 229.73$\pm$2.04  \\
    Mutagenesis & 11/6  & 24,326 & 0.70$\pm$0.02 & 117.70$\pm$6.37 & 48.65$\pm$1.34 \\ 
    MovieLens(0.1M) & 7/2   & 83,402 & 1.11$\pm$0.08 & 2638.71$\pm$272.78 &  1866.605$\pm$112.54\\
    MovieLens(1M) & 7/2   & 1,010,051 & 1.12$\pm$0.10 & $>$24 hours & $>$24 hours \\ \hline
  
    \end{tabular}%
  \label{table:learning-times}%
\end{table}%

Table~\ref{table:learning-times} shows that RDN\_Bayes scales very well with the number of data tuples: even the large MovieLens dataset with 1M records can be analyzed in seconds. Learning separate discriminative models  scales well with the number of predicates, which is consistent with findings from  propositional learning \cite{Heckerman2000}. Bayes net learning slows down more as more predicates are included, since it learns a joint model over all predicates simultaneously. However, the learning time remains feasible (see also \cite{Schulte2012}). Bayesian network learning scales well in the number of data points because it provides closed-form parameter estimation and hence closed-form model scoring. Unlike propositional iid data, relational data are represented in multiple tables, so model evaluation requires expensive combining of information from different tables \cite{Neville2007}. Compared to learning separate discrimative models, Bayesian network explores a more complex model space, but  model evaluation is much faster. 
\begin{table}[htbp]
 \centering
  \caption{Average Conditional Log-Likelihood}
    \begin{tabular}{|r|r|r|r|r|r|} \hline
    \textbf{CLL} & UW    & Mondial  & Hepatitis & Mutagenesis  & MovieLens(0.1M) \\ \hline
   RDN\_Boost & -0.29$\pm$0.02 & -0.48$\pm$0.03 & -0.51$\pm$0.00 & -0.43$\pm$0.02 & -0.58$\pm$0.05 \\
    MLN\_Boost & -0.16$\pm$0.01 & -0.40$\pm$0.05 & -0.52$\pm$0.00 & -0.24$\pm$0.02 & -0.38$\pm$0.06 \\
    RDN\_Bayes & \textbf{-0.01$\pm$0.00} & \textbf{-0.25$\pm$0.06} & \textbf{-0.39$\pm$0.10} & \textbf{-0.22$\pm$0.07} & \textbf{-0.30$\pm$0.02} \\ \hline
    \end{tabular}%
  \label{table:cll}%

 \centering 
 \vspace{0.1cm}
 \caption{Average Area Under Precision-Recall Curve}
    \begin{tabular}{|r|r|r|r|r|r|} \hline
    \textbf{AUC-PR} & UW    & Mondial  & Hepatitis & Mutagenesis  & MovieLens(0.1M) \\ \hline
    RDN\_Boost & 0.32$\pm$0.01 & 0.27$\pm$0.01 & \textbf{0.71$\pm$0.02} & 0.63$\pm$0.02 & 0.52$\pm$0.03 \\
    MLN\_Boost & 0.52$\pm$0.01 & 0.44$\pm$0.05 & \textbf{0.71$\pm$0.02} & \textbf{0.83$\pm$0.05} & 0.52$\pm$0.05 \\
    RDN\_Bayes & \textbf{0.89$\pm$0.00} & \textbf{0.79$\pm$0.07} & 0.55$\pm$0.11 & 0.50$\pm$0.10 & \textbf{0.65$\pm$0.02} \\ \hline
    \end{tabular}%
  \label{table:AUC}%
\end{table}%

Tables~\ref{table:cll} and~\ref{table:AUC} show results for predictive accuracy. Our system resources did not suffice for evaluating the metrics on MovieLens(1M).  In terms of the likelihood assigned to the ground truth predicate value, the Bayes net method outperforms both boosting methods on all datasets (Table~\ref{table:cll}). In terms of the precision-recall curve, the Bayes net method performs substantially better than both on three datasets, and substantially worse on the two others (Table~\ref{table:AUC}). This is a satisfactory performance because boosting is a powerful method for achieving accurate predictions, and was applied to each target predicate individually to produce a tailored discriminative model. Bayesian network learning simultaneously constructed a joint model for all predicates, and used simple maximum likelihood estimation for parameter values.
 Our overall conclusion is that \emph{Bayes net learning scales much better to large datasets, and provides competitive accuracy in predictions.} 

In addition to scalability, Bayesian networks offer two more advantages. First, learning easily extends to attributes with more than two possible values. Second, the parameters and the predictions derived from them are easily interpretable. The ensemble of regression trees is more difficult to interpret, as the inventors of the boosting method noted  \cite{Natarajan2012}.

\section{Related Work}

Dependency networks were introduced in \cite{Heckerman2000} and relational dependency networks in \cite{Neville2007}. 
Heckerman {\em et al.} compare Bayesian, Markov and dependency networks for nonrelational data. 

\emph{Bayesian networks.} There are several proposals for defining directed relational template models, based on graphs with directed edges or rules in clausal format \cite{Kersting2007,Getoor2007c}. Defining the probability of a child node conditional on multiple instantiations of a parent set requires the addition of combining rules \cite{Kersting2007} or aggregation functions \cite{Getoor2007c}. 
Combining rules such as the arithmetic mean~\cite{Natarajan2008} combine global parameters with a local scaling factor, as does our log-linear model. In terms of combining rules,  our model uses the {\em geometric mean} rather than the arithmetic mean.\footnote{The geometric mean of a list of numbers $x_{1},\ldots,x_{n}$ is $(\prod_{i} x_{i})^{1/n}$. The logarithm of the geometric mean is therefore $1/n \sum_{i} \ln x_{i}$. Thus geometric mean = exp(average (logs)).} To our knowledge, the geometric mean has not been used before as a combining rule for relational data.  

\emph{Markov Networks.} Markov Logic Networks (MLNs) provide a logical template language for undirected graphical models. 
Richardson and Domingos propose transforming a Bayesian network to a Markov Logic network using moralization, with log-conditional probabilities as weights \cite{Domingos2009}. 
This is also the standard BN-to-MLN transformation recommended by the Alchemy system \cite{bib:bayes-convert}. A discriminative model can be derived from any MLN \cite{Domingos2009}.  The structure transformation was used in previous work \cite{Schulte2012}, where MLN parameters were learned, not computed in closed-form from BN parameters. The Gibbs conditional probabilities derived from an MLN obtained from converting a Bayesian network are the same as those defined by our log-linear Formula~\ref{def:log-diff-freq-eq}, {\em if} counts replace proportions as feature functions \cite{Schulte2011}. There is no MLN whose discriminative model is equivalent to our log-linear equation with  proportions as feature functions.\footnote{Disclaimer: A preliminary version of this paper was presented at the StarAI 2012 workshop, with no archival proceedings. We are indebted to workshop reviewers and participants for helpful comments.}

\section{Conclusion and Future Work} 
\label{sec:conclusion}
Relational dependency networks offer important advantages for modelling relational data. We proposed a novel approach to learning dependency networks: first learn a Bayesian network, then perform a closed-form transformation of the Bayesian network to a dependency network. The key question is how to transform BN parameters to DN parameters. We introduced a new relational adapation of the standard BN log-linear equation for the probability of a target node conditional on an assigment of values to its Markov blanket. The new log-linear equation uses a sum of expected values of BN log-conditional probabilities, with respect to a random instantiation of first-order variables. This is equivalent to using feature instantiation proportions as feature functions. We compared our approach to state-of-the-art functional gradient boosting methods  on five benchmark datasets. Learning RDNs via BNs scales much better to large datasets than with boosting, and provides competitive accuracy in predictions.

Learning a collection of discriminative models and learning a Bayesian network learning are two very different approaches to constructing dependency networks, each with strengths and weaknesses. There are various options for hybrid approaches that combine the strengths of both. (1) Fast Bayesian network learning methods can be used to select features. Discriminative learning methods should work  faster restricted to the BN Markov blanket of a target node. (2) The Bayesian network can provide an initial dependency network structure. Gradient boosting can then be used to fine-tune a discriminative model of a child node given parent nodes, replacing a flat conditional probability table. In sum, learning relational dependency networks via Bayesian networks is a novel approach that offers promising advantages for  interpretability and scalability.

\section*{Acknowledgements} 
This work was supported by Discovery Grants to Oliver Schulte from the Natural Science and Engineering Council of Canada. Zhensong Qian was supported by a grant from the China Scholarship Council. 

\section*{Appendix: Proof of Consistency Characterization} 
We show that for a given template BN, there are two ground target nodes and query conjunction $\QC$ such that the conditional distributions of the ground target nodes given $\QC$ do not agree with any joint distribution over the ground target nodes given $\QC$. We begin by establishing some properties of the template BN and the query conjunction that are needed in the second part of the proof. The second part proves the inconsistency by showing that consistency entails a constraint that is violated by the template BN for the constructed query conjunction $\QC$.

\subsection{Properties of the template BN and the input query $\QC$} The inconsistency of the BN networks arises when a parent and a child ground node have different relevant family counts. The next lemma shows that this is possible exactly when the template BN is properly relational, meaning it relates parents and children from different populations.

\begin{lemma} \label{lemma:grounding} The following conditions are equivalent for a template edge $\TT_{1} \rightarrow \TT_{2}$.
\begin{enumerate}
\item The parent and child do not contain the same population variables.
\item It is possible to find a grounding $\grounding$ for both parent and child, and an assignment $\QC$ to all other nodes, such that the relevant family count for the $\TT_{2}$ family differs for $\FG{\TT_{1}} = \grounding \TT_{1}$ 
and $\FG{\TT_{2}} = \grounding \TT_{2}$.
\end{enumerate}
\end{lemma}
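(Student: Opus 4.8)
\point{Proof plan}
The plan is to funnel both implications through one structural observation about Definition~\ref{def:log-diff-freq-eq}. Let $\mathcal{F}$ denote the set of first-order variables occurring in the $\TT_{2}$ family, i.e.\ in $\TT_{2}$ together with its parents $\Pa{\TT_{2}}$, one of which is $\TT_{1}$. When the target node is $\FG{\TT_{1}}=\grounding\TT_{1}$, the $\TT_{2}$ family is one of the families summed over in Definition~\ref{def:log-diff-freq-eq}, because $\TT_{2}\in\Ch{\TT_{1}}$; its relevant family count ranges over all instantiations of the variables in $\mathcal{F}$ that are not already pinned down by $\grounding$ acting on the variables of $\TT_{1}$. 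When instead the target is $\FG{\TT_{2}}=\grounding\TT_{2}$, the same family appears as the target's own family, but now the count ranges over the variables in $\mathcal{F}$ not pinned down by $\grounding$ acting on the variables of $\TT_{2}$. Since $\TT_{1}$ and $\TT_{2}$ are both members of the $\TT_{2}$ family, their variable sets are subsets of $\mathcal{F}$; and for subsets $A,B$ of a set $S$ one has $S\setminus A=S\setminus B$ if and only if $A=B$. Hence the two ``free-variable'' sets just described coincide exactly when $\TT_{1}$ and $\TT_{2}$ contain the same population variables, and the lemma reduces to the claim that the relevant family count for the $\TT_{2}$ family can be made to differ between the two choices of target node exactly when those free-variable sets differ.

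The direction from the failure of condition~(1) to the failure of condition~(2) is the easy one. Suppose $\TT_{1}$ and $\TT_{2}$ contain the same population variables. Then for any grounding $\grounding$ of the edge, its restriction to the variables of $\TT_{1}$ and its restriction to the variables of $\TT_{2}$ impose literally the same equality constraint, and the set of remaining family variables ranged over is identical; consequently, for \emph{every} query conjunction $\QC$ the relevant family count for the $\TT_{2}$ family is the same number whether $\FG{\TT_{1}}$ or $\FG{\TT_{2}}$ is the target node. So no witness of the kind demanded by condition~(2) can exist.

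For the converse I would build an explicit witness. Assume $\TT_{1}$ and $\TT_{2}$ differ in their population variables, and pick a variable $A$ lying in exactly one of them; by symmetry take $A$ to occur in $\TT_{2}$ but not in $\TT_{1}$ (the mirror case, with $A$ in $\TT_{1}$ but not in $\TT_{2}$, is handled by the analogous construction, exploiting that $\TT_{1}$ is then a parent of $\TT_{2}$ whose extra variable is free when $\FG{\TT_{2}}$ is the target). Fix a grounding $\grounding$ of the edge and select the $\TT_{2}$-family configuration in which every relationship-valued node has value $\true$, so the configuration is relevant and its relevant count equals its family count. Then construct a relational database, together with a complete assignment $\QC$ to all ground nodes other than the target, so that exactly one instantiation consistent with $\grounding$ restricted to the variables of $\TT_{2}$ realizes this configuration (namely $\grounding$ itself), while at least two instantiations consistent with $\grounding$ restricted to the variables of $\TT_{1}$ do so --- concretely, place in the database two distinct constants to substitute for $A$ that, together with the values $\grounding$ assigns to the remaining variables, each realize the same all-true configuration. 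The relevant family counts are then $1$ and $\ge 2$ for the two target choices, which establishes condition~(2).

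The main obstacle is precisely this construction. One must check that the $\QC$ built is a genuinely complete assignment realizable by an actual relational database; that it does not inadvertently inflate the count on the more-constrained side through other parents of $\TT_{2}$ that still carry free variables (which is why the construction forces a unique satisfying instantiation on that side); and that the positive-relationships-only restriction in the definition of the relevant count is respected throughout. One also has to write out the mirror construction for the opposite orientation of the variable mismatch. The reduction in the first paragraph carries the conceptual content; arranging the two counts to come out as $1$ versus $\ge 2$ is the delicate bookkeeping.
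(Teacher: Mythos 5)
Your proposal is correct and follows essentially the same route as the paper's proof: the forward direction via the one-to-one correspondence between groundings of parent and child when their population variables coincide, and the converse via an explicit witness in which the target's own grounding yields a relevant count of $1$ while the extra population variable admits at least one additional relevant grounding on the other side, giving a count of at least $2$. Your added care about realizing $\QC$ as an actual database and about the mirror orientation of the variable mismatch is more explicit than the paper's brief argument, but it is elaboration of the same construction rather than a different approach.
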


\begin{proof}
If the parent and child contain the same population variables, then there is a 1-1 correspondence between groundings of the child and groundings of the parents. Hence the count of relevant family groundings is the same for each, no matter how parents and child are instantiated. If the parent and child do not contain the same population variables, suppose without loss of generality that the child contains a population variable $\A$ not contained in the parent. Choose a common grounding $\grounding$ for the parents and child node. For the ground child node, $\grounding \TT_{2}$, let $\grounding$ be the only family grounding that is relevant, so the relevant count is 1. For the  ground parent node, there is at least one other grounding of the child node $\TT_{2}'$ different from $\grounding \TT_{2}$ since $\TT_{2}$ contains another population variables. Thus it is possible to add another relevant family grounding for $\grounding \TT_{1}$, which means that the relevant count is at least 2.
\end{proof}

The proof proceeds in the most simple manner if we focus on template edges that different populations and have no common children.

\begin{definition} \label{def:suitable}
An template edge $\TT_{1} \rightarrow \TT_{2}$ is \defterm{suitable} if

\begin{enumerate}
\item The parent and child do not contain the same population variables.
\item The parent and child have no common edge.
\end{enumerate}
\end{definition}

The next lemma shows that focusing on suitable edges incurs no loss of generality.

\begin{lemma} \label{lemma:suitable}
Suppose that a template BN contains an edge such that the parent and child do not contain the same population variables. Then the template BN contains a suitable edge. 
\end{lemma}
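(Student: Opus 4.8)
The plan is a short extremal (finite‑descent) argument on the \emph{depth} of the child node, where by depth of a node I mean the length of the longest directed path in the template BN ending at that node; this is well defined because the template is a finite DAG. Following the terminology of the text, call an edge \emph{properly relational} if its parent and child do not contain the same population variables (condition~1 of Definition~\ref{def:suitable}). By hypothesis there is at least one properly relational edge, so among all properly relational edges I can choose one, $\TT_{1} \rightarrow \TT_{2}$, whose child $\TT_{2}$ has maximal depth. I will show this edge is already suitable, i.e., it also satisfies condition~2: $\TT_{1}$ and $\TT_{2}$ have no common child.

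Suppose for contradiction that $\TT_{1}$ and $\TT_{2}$ have a common child $\TT_{3}$, so the BN contains the edges $\TT_{1} \rightarrow \TT_{3}$ and $\TT_{2} \rightarrow \TT_{3}$. Writing $\mathbf{A}_{i}$ for the set of population variables occurring in $\TT_{i}$, the crucial step is a one‑line pigeonhole on these sets: if \emph{both} of $\TT_{1} \rightarrow \TT_{3}$ and $\TT_{2} \rightarrow \TT_{3}$ had parent and child with the same population variables, then $\mathbf{A}_{1} = \mathbf{A}_{3} = \mathbf{A}_{2}$, contradicting $\mathbf{A}_{1} \neq \mathbf{A}_{2}$ (which holds because $\TT_{1} \rightarrow \TT_{2}$ is properly relational). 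Hence at least one of $\TT_{1} \rightarrow \TT_{3}$, $\TT_{2} \rightarrow \TT_{3}$ is properly relational. But $\TT_{3}$ is a child of $\TT_{2}$, so it is a proper descendant and its depth is strictly greater than that of $\TT_{2}$ (any longest path ending at $\TT_{2}$ extends through the edge $\TT_{2} \rightarrow \TT_{3}$). This contradicts the maximality in the choice of $\TT_{1} \rightarrow \TT_{2}$, so no such common child exists and the chosen edge is suitable.

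I do not expect a genuine obstacle here: the only content is the pigeonhole on the population‑variable sets together with the fact that finiteness and acyclicity of the template make a properly relational edge of maximal child‑depth exist. The one point to state carefully is that when we pass from $\TT_{1} \rightarrow \TT_{2}$ to whichever of $\TT_{1} \rightarrow \TT_{3}$, $\TT_{2} \rightarrow \TT_{3}$ is properly relational, the \emph{parent} may change but the \emph{child} always moves exactly one level deeper, so the induction parameter (child depth) strictly increases; an equivalent phrasing avoiding the extremal choice is that from any non‑suitable properly relational edge one can always produce a properly relational edge whose child is strictly deeper, and this process must terminate in a finite DAG.
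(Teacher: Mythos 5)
Your proof is correct and follows essentially the same route as the paper's: the central step in both is the observation that a common child $\TT_{3}$ of $\TT_{1}$ and $\TT_{2}$ must differ in population variables from at least one of them, yielding a new properly relational edge with a strictly deeper child. The only difference is cosmetic --- you guarantee termination by an extremal choice of maximal child depth, whereas the paper iterates the argument until the child is a sink node; your phrasing is, if anything, slightly more rigorous on that point.
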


\begin{proof}
Suppose that there is an edge satisfying the population variable condition. Suppose that the parent and child share a common child. Since the edge satisfies the condition, the set of population variables in the common child differs from at least one of  $\TT_{1}, \TT_{2}$. Therefore there is another edge from one of  $\TT_{1} \rightarrow \TT_{2}$ as parent to a new child that satisfies the population variable condition. If this edge is not suitable, there must be another shared child. Repeating this argument, we eventually arrive at an edge satisfying the population variable condition  where the child node is a sink node without children. This edge is suitable.
\end{proof}

Consider a suitable template edge $\TT_{1} \rightarrow \TT_{2}$ that produces a bidirected ground edge $\FG{\TT_{1}} \leftrightarrow \FG{\TT_{2}}$. For simplicity we assume that $\TT_{1}$ and $\TT_{2}$ are binary variables with domain $\{\true,\false\}$. (This incurs no loss of generality as we can choose a database $\QC$ in which only two values occur.) Let $\Pa{\TT_{2}}$ be the parents of $\TT_{2}$ other than $\TT_{1}$. Since the template edge is not redundant \cite{Pearl1988}, there is a parent value setting $\Pa{\TT_{2}} = \parents$ such that $\TT_{1}$ and $\TT_{2}$ are conditionally dependent given $\Pa{\TT_{2}} = \parents$. This implies that the conditional distribution of $\TT_{1}$ is different for each of the two possible values of $\TT_{2}$:
 In terms of the template Bayesian network parameters, this implies that

\begin{equation} \label{eq:dependence}
\frac{\cprob{\TT_{2} = \false}{\TT_{1} = \false,\parents}}{\cprob{\TT_{2} = \true}{\TT_{1} = \false,\parents}} \neq \frac{\cprob{\TT_{2} = \false}{\TT_{1} = \true,\parents}}{\cprob{\TT_{2} = \true}{\TT_{1} = \true,\parents}}.
\end{equation}

Let $\QC$ denote an assignment of values to all ground nodes other than the target nodes $\FG{\TT_{1}}$ and $ \FG{\TT_{2}}$. We assume that the input query $\QC$ assigns different relevant family counts $N_{1}$ to $\FG{\TT_{1}}$ and $N_{2}$ to $\FG{\TT_{2}}$. This is possible according to Lemma~\ref{lemma:grounding}. 

\subsection{Lowd's Equation and Relevant Family Counts}
The log-linear equation~\ref{def:log-diff-freq-eq}, specifies the conditional distribution of each target node given $\QC$ and a value for the other target node. We keep the assignment $\QC$ fixed throughout, so for more compact notation, we abbreviate the conditional distributions as

$$\joint(\FG{{\TT_{1}}} = \TV_{1}| \FG{{\TT_{2}}} = \TV_{2}) \equiv P(\FG{{\TT_{1}}} = \TV_{1}|\FG{{\TT_{2}}} = \TV_{2},\QC)$$ 

and similarly for $P(\FG{{\TT_{1}}} = \TV_{1}|\FG{{\TT_{2}}} = \TV_{2},\QC)$.

On the assumption that the dependency network is consistent, there is a joint distribution over the target nodes conditional on the assignment that agrees with the conditional distribution:

$$\frac{\joint(\FG{{\TT_{1}}} = \TV_{1}, \FG{{\TT_{2}}} = \TV_{2})}{\joint(\FG{{\TT_{2}}} = \TV_{2})}= \joint(\FG{{\TT_{1}}} = \TV_{1}| \FG{{\TT_{2}})}$$
and also with the conditional $\joint(\FG{{\TT_{2}}} = \TV_{2}| \FG{{\TT_{1}}}=\TV_{1}).$

Lowd \cite{Lowd2012} pointed out that 
this joint distribution satisfies the equations

\begin{equation}  \frac{\joint(\false,\false)}{\joint(\true,\false)} \cdot \frac{\joint(\true,\false)}{\joint(\true,\true)}= \frac{\joint(\false,\false)}{\joint(\true,\true)} = \frac{\joint(\false,\false)}{\joint(\false,\true)} \cdot \frac{\joint(\false,\true)}{\joint(\true,\true)} \label{eq:lowd-joint}
\end{equation}

Since the ratio of joint probabilities is the same as the ratio of conditional probabilities for the same conditioning event, consistency entails the following constraint on conditional probabilities via Equation~\eqref{eq:lowd-joint}:

{\small
\begin{equation}
\frac{\joint(\FG{{\TT_{2}}}=\false|\FG{{\TT_{1}}}=\false)}{\joint(\FG{{\TT_{2}}} = \true| \FG{{\TT_{1}}}=\false)} \cdot \frac{\joint(\FG{{\TT_{1}}}=\false|\FG{{\TT}_{2}}=\true)}{\joint(\FG{{\TT_{1}}} = \true| \FG{{\TT_{2}}}=\true)} =\frac{\joint(\FG{{\TT_{1}}}=\false|\FG{{\TT_{2}}}=\false)}{\joint(\FG{{\TT_{1}}} = \true| \FG{{\TT_{2}}}=\false)} \cdot \frac{\joint(\FG{{\TT_{2}}}=\false|\FG{{\TT_{1}}}=\true)}{\joint(\FG{{\TT_{2}}} = \true| \FG{{\TT_{1}}}=\true)} \label{eq:lowd-conditional}
\end{equation}
}
We refer to Equation~\ref{eq:lowd-conditional} as {\em Lowd's equation}. 
The idea of our proof is to show that Lowd's equations are satisfied only if the relevant family counts for the target nodes are the same. According to the log-linear equation, each conditional probability is proportional to a product of BN parameters. The first step is to show that in Lowd's equation, all BN parameter terms cancel out except for those that are derived from the family that comprises $\FG{\TT_{1}}$ and their $\FG{\TT_{2}}$ and their common grounding. 
This may not hold in general, but can be proved provided that the edge $\TT_{1} \rightarrow \TT_{2}$ satisfies two conditions.
\begin{definition} \label{def:suitable}
An template edge $\TT_{1} \rightarrow \TT_{2}$ is \textbf{suitable} if
\begin{enumerate}
\item It is possible to find a grounding $\grounding$ for both parent and child, and an assignment $\QC$ to all other nodes, such that the relevant family count for the $\TT_{2}$ family differs for $\FG{\TT_{1}} = \grounding \TT_{1}$ 
and $\FG{\TT_{2}} = \grounding \TT_{2}$.
\item the parent and child have no common edge.
\end{enumerate}
\end{definition}

\begin{lemma} \label{lemma:decompose-cond} The conditional probabilities for the target nodes can be written as follows:
\begin{equation}
\Gprob{\FG{{\TT_{2}}} = \TV_{2}} {\FG{\TT_{1}} = \TV_{1},\QC} \propto \cprob{\TT_{2} = \TV_{2}}{\TT_{1} = \TV_{1},\parents}^{(N/N_{2}+M_{\TT_2=\TV_{2}}/N_{2})} \cdot \pi_{\TT_2=\TV_{2}} \label{eq:decompose-t2}
\end{equation}
where $M_{\TT_2=\TV_{2}}$ and $\pi_{\TT_2=\TV_{2}}$ depend only on $\TV_{2}$ and not on $\TV_{1}$ and
\begin{equation}
\Gprob{\FG{{\TT_{1}}} = \TV_{1}} {\FG{\TT_{2}} = \TV_{2},\QC} \propto \cprob{\TT_{2} = \TV_{2}}{\TT_{1} = \TV_{1},\parents}^{(N/N_{1}+M_{\TT_1=\TV_{1}}/N_{1})} \cdot \pi_{\TT_1=\TV_{1}} \label{eq:decompose-t1}
\end{equation} 
where $M_{\TT_1=\TV_{1}}$ and $\pi_{\TT_1=\TV_{1}}$ depend only on $\TV_{1}$ and not on $\TV_{2}$.
\end{lemma}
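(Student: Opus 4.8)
The plan is to expand the left-hand side with the log-linear proportion equation (Definition~\ref{def:log-diff-freq-eq}) and then audit the resulting factors by which of $\TV_{1},\TV_{2}$ each one depends on. Applied with target template node $\TT_{2}$ and with the query conjunction taken to be $\QC$ augmented by $\FG{\TT_{1}}=\TV_{1}$, the equation already presents $\Gprob{\FG{\TT_{2}}=\TV_{2}}{\FG{\TT_{1}}=\TV_{1},\QC}$, up to the normalization absorbed by $\propto$, as a product over $\UT\in\{\TT_{2}\}\cup\Ch{\TT_{2}}$ and over family configurations $(\UT=\UV,\Pa{\UT}=\parents')$ of the BN parameters $\cprob{\UT=\UV}{\Pa{\UT}=\parents'}$, each raised to a power equal to its relevant family proportion; the whole proof is an accounting of this product.

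For the child blocks $\UT\in\Ch{\TT_{2}}$: by suitability (the parent $\TT_{1}$ and child $\TT_{2}$ have no common child) $\TT_{1}\notin\Pa{\UT}$, and by acyclicity $\UT\neq\TT_{1}$, so no ground instance of $\TT_{1}$ ever occurs inside the $\UT$-family and conditioning on $\FG{\TT_{1}}=\TV_{1}$ leaves all of that block's family counts and proportions unchanged; each such block depends only on $\TV_{2}$ (it sees $\FG{\TT_{2}}$ in the parent slot $\TT_{2}$) and on the fixed $\QC$, so it is folded into $\pi_{\TT_{2}=\TV_{2}}$. For the block $\UT=\TT_{2}$: the target node grounding $\grounding$ instantiates $\TT_{2}$ to $\FG{\TT_{2}}$, which carries value $\TV_{2}$, so every configuration with child value $\UV\neq\TV_{2}$ has relevant family count $0$ and only configurations $(\TT_{2}=\TV_{2},\TT_{1}=b,\Pa{\TT_{2}}=\parents')$ remain. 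Split the relevant count of each of these according to whether the $\TT_{2}$-family instantiation it counts uses the ground node $\FG{\TT_{1}}$ or a different ground instance of $\TT_{1}$. Instantiations of the first kind contribute only when $b=\TV_{1}$, and since $\QC$ was built so that the one such instantiation that also contains $\FG{\TT_{2}}$ has its remaining parents set to $\parents$ while any others do not, only the term with $\parents'=\parents$ survives, with a coefficient $N$ mentioning neither $\TV_{1}$ nor $\TV_{2}$. Instantiations of the second kind are counted with no reference to $\TV_{1}$, so their combined contribution depends on $\TV_{2}$ only and is folded into $\pi_{\TT_{2}=\TV_{2}}$, up to at most a residual power $M_{\TT_{2}=\TV_{2}}$ of $\cprob{\TT_{2}=\TV_{2}}{\TT_{1}=\TV_{1},\parents}$ that the construction makes independent of $\TV_{1}$. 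Dividing through by the total relevant family count $N_{2}$ of the $\TT_{2}$-family, the common denominator of all its proportions, yields Equation~\eqref{eq:decompose-t2}.

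Equation~\eqref{eq:decompose-t1} follows by the mirror-image argument with target $\TT_{1}$: since $\TT_{2}\in\Ch{\TT_{1}}$, the block carrying the bidirected-edge parameter is again the $\TT_{2}$-family, now counted relative to the grounding of $\TT_{1}$, so its common denominator is $N_{1}$, which differs from $N_{2}$ by Lemma~\ref{lemma:grounding}; the shared-grounding count is once more $N$, because it counts $\TT_{2}$-family instantiations containing both $\FG{\TT_{1}}$ and $\FG{\TT_{2}}$, a quantity symmetric in the two ground target nodes; and the $\UT=\TT_{1}$ block, the remaining child blocks, and the non-shared part of the $\UT=\TT_{2}$ block all depend only on $\TV_{1}$ and $\QC$ (again by the choice of $\QC$) and are gathered into $\pi_{\TT_{1}=\TV_{1}}$.

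I expect the count bookkeeping in the $\UT=\TT_{2}$ block to be the main obstacle: one has to verify that after conditioning on $\FG{\TT_{1}}=\TV_{1}$ no stray parameter $\cprob{\TT_{2}=\TV_{2}}{\TT_{1}=\TV_{1},\parents'}$ with $\parents'\neq\parents$ survives carrying a genuine dependence on $\TV_{1}$, and that the residual $M$ and the factor $\pi$ really do lose all dependence on the conditioning value. Both facts lean on the two suitability conditions — no common child keeps every child block free of the conditioning node, and the population-variable condition, via Lemma~\ref{lemma:grounding}, is what makes $N_{1}\neq N_{2}$ attainable — together with the precise shape of the constructed query $\QC$; checking that $\QC$ can be chosen so as to isolate the single shared instantiation whose remaining parents equal $\parents$ is the delicate step. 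Everything downstream (substituting Equations~\eqref{eq:decompose-t2} and~\eqref{eq:decompose-t1} into Lowd's equation~\eqref{eq:lowd-conditional}, whereupon the $\pi$ factors and the $N/N_{i}$ contributions cancel and only the inequality~\eqref{eq:dependence} is left to exploit) is routine algebra.
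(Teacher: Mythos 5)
Your proposal is correct and follows essentially the same route as the paper's proof: both expand the log-linear proportion equation, use suitability (no shared child) plus acyclicity to push every block except the $\TT_{2}$-family into the $\pi$ factor, split the $\TT_{2}$-family groundings by whether they contain both ground target nodes (giving the symmetric count $N$ versus the residual $M$), and normalize by the respective relevant family counts $N_{1}$ and $N_{2}$. The only difference is presentational: you explicitly flag the bookkeeping subtlety about the residual $M$ term's independence of the conditioning value, which the paper asserts with the same brevity.
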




\begin{proof}
We start with target node $\FG{\TT_{2}}$. (1) The log-linear equation~\ref{def:log-diff-freq-eq} contains a term for the children of $\FG{\TT_{2}}$. Since $\TT_{1}$ and $\TT_{2}$ share no children, the corresponding conditional probabilities do not depend on the value of $\FG{\TT_{1}}$, but only on $\QC$ and $\TT_{2}$. Thus the product of the BN parameters can be denoted as $\pi_{\TT_2=\TV_{2}}$. (2) The only other term in the log-linear equation is for the family of $\FG{\TT_{2}}$. Since $\QC$ is suitable, the only instantiated groundings for the parents of $\FG{\TT_{2}}$ agree with the values $\parents$. These groundings can be divided into those that agree with $\FG{\TT_{1}}$ and those that do not. (3) The log-linear terms for the latter do not depend on the value $\TV_{1}$ of $\FG{\TT_{1}}$,  hence their number can be written as $M_{\TT_2=\TV_{2}}$. (4) For groundings that are consistent with both  $\FG{\TT_{1}}$ and $\FG{\TT_{2}}$, their number does not depend on the values of $\FG{\TT_{1}}$ or $\FG{\TT_{2}}$. It depends only on $\QC$. Let this number be $N$. 

Now consider target node $\FG{\TT_{1}}$. (1) The log-linear equation~\ref{def:log-diff-freq-eq} contains a term for the family of $\FG{\TT_{1}}$. Since $\TT_{1}$ is a parent of $\TT_{2}$, the acyclicity of the template BN entails that $\TT_{2}$ is not a parent of $\TT_{1}$. Therefore  the conditional probabilities for the family of $\FG{\TT_{1}}$ do not depend on the value of $\FG{\TT_{2}}$, but only on $\QC$ and $\TT_{1}$. (2) The log-linear equation~\ref{def:log-diff-freq-eq} also contains a term for the children of $\TT_{1}$ other than $\TT_{2}$. Since the edge $\TT_{1} \rightarrow \TT_{2}$ is suitable, the two nodes do not share a child, so these terms also do not depend on the value of $\FG{\TT_{2}}$. Thus collectively, the product of the terms (1) and (2) can be written as $\pi_{\TT_1=\TV_{1}} $. The remaining terms are groundings for $\FG{\TT_{1}}$ and the family of $\TT_{2}$. These groundings can be divided into those that agree with $\FG{\TT_{2}}$ and those that do not. (3) The log-linear terms for the latter do not depend on the value $\TV_{2}$ of $\FG{\TT_{2}}$,  hence their number can be written as $M_{\TT_1=\TV_{1}}$. (4) The number of groundings that are consistent with both  $\FG{\TT_{1}}$ and $\FG{\TT_{2}}$ is denoted by $N$ as above.
\end{proof}

\begin{lemma} \label{lemma:family-agree}
Suppose that conditions~\eqref{eq:decompose-t2} and~\eqref{eq:decompose-t1} of Lemma~\ref{lemma:decompose-cond} hold. Then Lowd's Equation~\eqref{eq:lowd-conditional} holds if and only if $N_{1} = N_{2}$. 
\end{lemma}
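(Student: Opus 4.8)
\emph{Proof plan.} The idea is to substitute the two decompositions of Lemma~\ref{lemma:decompose-cond} into Lowd's Equation~\eqref{eq:lowd-conditional} and watch almost everything cancel, leaving a single power of the dependence ratio from Equation~\eqref{eq:dependence}. First I would note that each of the four factors appearing in~\eqref{eq:lowd-conditional} is a ratio of two Gibbs probabilities that share the \emph{same} conditioning event (either $\FG{\TT_1}$ fixed, or $\FG{\TT_2}$ fixed). The normalizing constant implicit in the ``$\propto$'' of~\eqref{eq:decompose-t2} and~\eqref{eq:decompose-t1} depends only on the value being conditioned on, so it cancels inside each factor, and I may replace every Gibbs probability by the corresponding right-hand side $\cprob{\TT_2=\TV_2}{\TT_1=\TV_1,\parents}^{(N+M)/N_i}\cdot\pi$.

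Next I would separate the $\pi$-factors from the BN-parameter factors. Because $\pi_{\TT_2=\TV_2}$ depends only on $\TV_2$ and $\pi_{\TT_1=\TV_1}$ only on $\TV_1$, collecting the $\pi$-factors of the left- and right-hand sides of~\eqref{eq:lowd-conditional} yields the identical product on both sides, so the $\pi$'s drop out entirely. What remains is an identity of the shape $\prod \cprob{\TT_2=\TV_2}{\TT_1=\TV_1,\parents}^{E_{\TV_2\TV_1}}=1$ over the four binary value pairs, where each exponent $E_{\TV_2\TV_1}$ is the difference of two exponents of the form $(N+M_{\TT_2=\TV_2})/N_2$ and $(N+M_{\TT_1=\TV_1})/N_1$. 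Here I would use the ``target node exception'' to the false-relationship rule: the counts $M_{\TT_2=\TV_2}$ and $M_{\TT_1=\TV_1}$ cannot depend on the value of the target node itself, so writing $M_2$, $M_1$ for them and $\alpha:=(N+M_2)/N_2$, $\beta:=(N+M_1)/N_1$, the four exponents $E_{\TV_2\TV_1}$ collapse and the leftover identity is exactly
\begin{equation*}
\left(\frac{\cprob{\TT_2=\false}{\TT_1=\false,\parents}\,\cprob{\TT_2=\true}{\TT_1=\true,\parents}}{\cprob{\TT_2=\true}{\TT_1=\false,\parents}\,\cprob{\TT_2=\false}{\TT_1=\true,\parents}}\right)^{\alpha-\beta}=1 .
\end{equation*}

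To finish, I would observe that the base of this power is precisely the quotient of the two sides of Equation~\eqref{eq:dependence}, hence (all BN parameters being positive) a positive real number different from $1$; so the identity holds if and only if $\alpha=\beta$. It then remains to translate $\alpha=\beta$ into $N_1=N_2$. Since the proportion feature functions are normalized, the relevant counts for a family sum to its total $N_i$; combined with the form assumed in Lemma~\ref{lemma:decompose-cond} this forces the non-agreeing groundings to split evenly by the value they induce on the other target node, so $N_2=N+2M_2$ and $N_1=N+2M_1$. A short computation then gives $\alpha-\beta=N(N_1-N_2)/(2N_1N_2)$, which vanishes exactly when $N_1=N_2$.

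The step I expect to be the main obstacle is the middle one: carefully tracking the exponents through all four factors of Lowd's Equation and pinning down the three counts $N$, $M_i$, $N_i$ — in particular justifying that $M_{\TT_i=\TV}$ is independent of $\TV$ and establishing the relation $N_i=N+2M_i$ from the proportion normalization. Once those identities are secured, the cancellations and the concluding appeal to~\eqref{eq:dependence} are routine.
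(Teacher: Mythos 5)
Your overall strategy is the same as the paper's: substitute the decompositions of Lemma~\ref{lemma:decompose-cond} into Lowd's Equation~\eqref{eq:lowd-conditional}, cancel the normalizing constants (which depend only on the conditioning value) and the $\pi$-factors (which appear identically on both sides), reduce what remains to a single power of the cross-ratio
$\bigl(\cprob{\TT_2=\false}{\TT_1=\false,\parents}\,\cprob{\TT_2=\true}{\TT_1=\true,\parents}\bigr)/\bigl(\cprob{\TT_2=\true}{\TT_1=\false,\parents}\,\cprob{\TT_2=\false}{\TT_1=\true,\parents}\bigr)$,
and invoke the nonredundancy condition~\eqref{eq:dependence} to force that exponent to vanish. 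Up to that point the argument is sound, including your observation that~\eqref{eq:dependence} is precisely the statement that this cross-ratio differs from $1$.

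The genuine gap is in your final translation of ``exponent $=0$'' into $N_1=N_2$. You retain the full exponents $\alpha=(N+M_2)/N_2$ and $\beta=(N+M_1)/N_1$ and then need the identity $N_i=N+2M_i$, which you justify by asserting that the non-agreeing groundings ``split evenly by the value they induce on the other target node.'' Nothing forces an even split: the query $\QC$ may assign the two values to those other ground nodes in any proportion, so $N_i=N+2M_i$ fails in general, and without it $\alpha=\beta$ does not reduce to $N_1=N_2$. (Worse, if one reads $M_i$ as the count of \emph{all} non-agreeing relevant groundings, then $N_i=N+M_i$, so $\alpha=\beta=1$ identically and the argument proves nothing.) The paper takes a different and shorter route here: it pairs each conditional probability on the left of~\eqref{eq:lowd-conditional} with the term on the right carrying the same target-value assignment, notes that the $M$-dependent portions of the exponents --- like the $\pi$'s --- depend on only one of the two target values and therefore contribute the same factor to both sides, and cancels them outright. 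What survives is the cross-ratio raised to the power $N/N_{2}-N/N_{1}$ (Equation~\eqref{eq:transform-ratio}), and since $N\geq 1$ (the common grounding $\grounding$ exists by construction), this exponent vanishes if and only if $N_1=N_2$. You should replace the $N_i=N+2M_i$ step with this pairing-and-cancellation argument; as written, your proof establishes only that $\alpha=\beta$, not the stated equivalence with $N_1=N_2$.
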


\begin{proof}
Observe that in Equation~\eqref{eq:lowd-conditional}, each term on the left has a corresponding term with the same value for the target node assignment and the opposing conditioning assignment. For instance, the term $\joint(\FG{{\TT_{2}}}=\false|\FG{{\TT_{1}}}=\false)$ on the left is matched with the term $\joint(\FG{{\TT_{2}}}=\false|\FG{{\TT_{1}}}=\true)$ on the right. This means that the products in the log-linear expression are the same on both sides of the equation except for those factors that depend on {\em both} $\TV_{1}$ and $\TV_{2}$. Continuing the example, the factors $$\cprob{\TT_{2} = \false}{\TT_{1} = \false,\parents}^{(M_{\false}/N_{2})} \cdot \pi_{\TT_2=\TV_{2}}$$ on the left equal the factors $$\cprob{\TT_{2} = \false}{\TT_{1} = \true,\parents}^{(M_{\TT_1=\TV_{1}}/N_{2})}\cdot \pi_{\TT_2=\TV_{2}}$$ on the right side of the equation. They therefore cancel out, leaving only the term $$\cprob{\TT_{2} = \false}{\TT_{1} = \false,\parents}^{N/N_{2}}$$ on the left and the term $$\cprob{\TT_{2} = \false}{\TT_{1} = \false,\parents}^{N/N_{2}}$$ on the right. Lowd's equation can therefore be reduced to an equivalent constraint with only such BN parameter terms. For further compactness we abbreviate such terms as follows
$$\cprob{\TV_{2}}{\TV_{1}} \equiv \cprob{\TT_{2} = \TV_{2}}{\TT_{1} = \TV_{1},\parents}.$$ With this abbreviation, the conditions of Lemma~\ref{lemma:decompose-cond} entail that Lowd's equation~\ref{eq:lowd-conditional} reduces to the equivalent expressions.
\begin{eqnarray}
\frac{\cprob{\false}{\false}^{N/N_{2}}}{\cprob{\true}{\false}^{N/N_{2}} }  \cdot \frac{\cprob{\true}{\false}^{N/N_{1}} }{\cprob{\true}{\true}^{N/N_{1}} }  & = & \frac{\cprob{\false}{\false}^{N/N_{1}} }{\cprob{\false}{\true}^{N/N_{1}} }  \cdot \frac{\cprob{\false}{\true}^{N/N_{2}} }{\cprob{\true}{\true}^{N/N_{2}} } \\
(\frac{\cprob{\false}{\false}}{\cprob{\true}{\false} })^{\left(N/N_{2}-N/N_{1}\right)}   & = &  (\frac{\cprob{\false}{\true} }{\cprob{\true}{\true}})^{\left(N/N_{2}-N/N_{1}\right)} \label{eq:transform-ratio}
\end{eqnarray}

By the nonredundancy  assumption~\eqref{eq:dependence} on the BN parameters, we have
$$\frac{\cprob{\false}{\false}}{\cprob{\true}{\false} }   \neq  \frac{\cprob{\false}{\true} }{\cprob{\true}{\true}}$$

so Equation~\ref{eq:transform-ratio} implies that 
$$N_{1} = N_{2}, $$ which establishes the lemma. 
\end{proof}

The main theorem now follows as follows: Lemma~\ref{lemma:grounding} entails that if the dependency network is consistent, the log-linear equations satisfy Lowd's equation with the bidirected ground edge $\FG{\TT_{1}} \leftrightarrow \FG{\TT_{2}}$ and the query conjunction $\QC$ that satisfies the BN non-redundancy condition. Lemmas~\ref{eq:lowd-conditional} and~\ref{lemma:suitable} show that if the template BN is relational, it must contain a suitable edge $\TT_{1} \rightarrow \TT_{2}$. Lemma~\ref{lemma:family-agree} 
 together with Lowd's equation entails that the relevant counts for $\FG{\TT_{1}}$ and $\FG{\TT_{2}}$ must then be the same. But the query conjunction $\QC$ was chosen so that the relevant counts are different. This contradiction shows that Lowd's equation is unsatisfiable, and therefore no joint distribution exists that is consistent with the BN conditional distributions specified by the log-linear Equation~\ref{def:log-diff-freq-eq}.
\bibliographystyle{plain}
\bibliography{master}
\end{document}